\documentclass{article}

\usepackage{arxiv}

\usepackage[utf8]{inputenc} % allow utf-8 input
\usepackage[T1]{fontenc}    % use 8-bit T1 fonts
\usepackage{hyperref}       % hyperlinks
\usepackage{url}            % simple URL typesetting
\usepackage{booktabs}       % professional-quality tables
\usepackage{amsfonts}       % blackboard math symbols
\usepackage{nicefrac}       % compact symbols for 1/2, etc.
\usepackage{microtype}      % microtypography
\usepackage{lipsum}		% Can be removed after putting your text content
\usepackage{graphicx}

\usepackage{amsthm}
\usepackage{amsmath,amssymb,amsfonts}
\usepackage{multirow}
\usepackage{makecell}

\newtheorem{prop}{Proposition}
\usepackage{subfigure}
\usepackage{textcomp}
\usepackage{xcolor}
\usepackage{bm}
\usepackage{algorithm}
\usepackage{algorithmic}
\newcommand{\ie}{\textit{i.e.}, }

\title{DRNet: A Decision-Making Method for Autonomous Lane Changing with Deep Reinforcement Learning}

%\date{September 9, 1985}	% Here you can change the date presented in the paper title
%\date{} 					% Or removing it

\author{ Kunpeng Xu \\
	Department of Computer Science\\
	Université de Sherbrooke\\
	Québec, Canada \\
	\texttt{kunpeng.xu@usherbrooke.ca} \\
	%% examples of more authors
	\And
	Lifei Chen \\
	College of Computer and Cyber Security\\
	Fujian Normal University\\
	Fuzhou, China \\
	\texttt{clfei@fjnu.edu.cn} \\
 \And
    Shengrui Wang \\
	Department of Computer Science\\
	Université de Sherbrooke\\
	Québec, Canada \\
	\texttt{shengrui.wang@usherbrooke.ca} \\ 
	%% \AND
	%% Coauthor \\
	%% Affiliation \\
	%% Address \\
	%% \texttt{email} \\
	%% \And
	%% Coauthor \\
	%% Affiliation \\
	%% Address \\
	%% \texttt{email} \\
	%% \And
	%% Coauthor \\
	%% Affiliation \\
	%% Address \\
	%% \texttt{email} \\
}

% Uncomment to remove the date
%\date{}

% Uncomment to override  the `A preprint' in the header
%\renewcommand{\headeright}{Technical Report}
%\renewcommand{\undertitle}{Technical Report}

%%% Add PDF metadata to help others organize their library
%%% Once the PDF is generated, you can check the metadata with
%%% $ pdfinfo template.pdf
\hypersetup{
pdftitle={A template for the arxiv style},
pdfsubject={q-bio.NC, q-bio.QM},
pdfauthor={David S.~Hippocampus, Elias D.~Striatum},
pdfkeywords={First keyword, Second keyword, More},
}

\begin{document}
\maketitle

\begin{abstract}
 Machine learning techniques have outperformed numerous rule-based methods for decision-making in autonomous vehicles. Despite recent efforts, lane changing remains a major challenge, due to the complex driving scenarios and changeable social behaviors of surrounding vehicles. To help improve the state of the art, we propose to leveraging the emerging \underline{D}eep \underline{R}einforcement learning (DRL) approach for la\underline{NE} changing at the \underline{T}actical level. To this end, we present ``DRNet", a novel and highly efficient DRL-based framework that enables a DRL agent to learn to drive by executing reasonable lane changing on simulated highways with an arbitrary number of lanes, and considering driving style of surrounding vehicles to make better decisions. Furthermore, to achieve a safe policy for decision-making, DRNet incorporates ideas from safety verification, the most important component of autonomous driving, to ensure that only safe actions are chosen at any time. The setting of our state representation and reward function enables the trained agent to take appropriate actions in a real-world-like simulator. Our DRL agent has the ability to learn the desired task without causing collisions and outperforms DDQN and other baseline models.
\end{abstract}

% keywords can be removed
\keywords{Autonomous vehicle \and Lane changing \and Decision-making \and Deep reinforcement learning.}

\section{Introduction}
The past few years have seen a rapid increase in interest in autonomous vehicles, which are widely regarded as one of the most important factors in improving transportation systems. For example, they have the potential to eliminate traffic accidents primarily resulting from improper operations by human drivers \cite{Fagnant2015Preparing}. Building such autonomous systems has been an active area of research, due to its high importance for creating and maintaining safer and more efficient road networks \cite{Cosgun2017Towards}. One of the fundamental skills that an autonomous vehicle must possess is the ability to perform lane-changing maneuvers, which is especially critical in the presence of multi-lane and fast-moving traffic. The lane-changing maneuver can be a demanding task because the vehicle needs to alertly watch the leading vehicle in its own lane and surrounding vehicles in the target lane, and to perform proper actions according to the potential adversarial or cooperative reactions demonstrated by those surrounding vehicles \cite{Urmson2009Autonomous}. 

A possible way of handling this complexity is by separating the planning task into high-level decision-making and maneuver execution \cite{Mirchevska2018High}. The decision-making layer can be viewed as a tactical level function which issues a lane-change command such as change lanes or stay in Lane, while in the maneuver execution layer, the specific motion is planned and executed. In the example shown in Fig.1, the high-level decision-making layer gives the order to change lanes to the left, and the maneuver execution layer performs operational control to coordinate the longitudinal and lateral movements for a safe, smooth and efficient lane change maneuver. The focus of this paper is on high-level policy for tactical decision-making. A study has shown that around 10 percent of all freeway crashes are caused by lane-change intention: a bad decision leads at best to congestion and at worst to accidents \cite{Jula2000Collision}.

%The focus of this paper is on high-level policy for tactical decision-making.
\begin{figure}[t]
	\centerline{\includegraphics[scale=0.6]{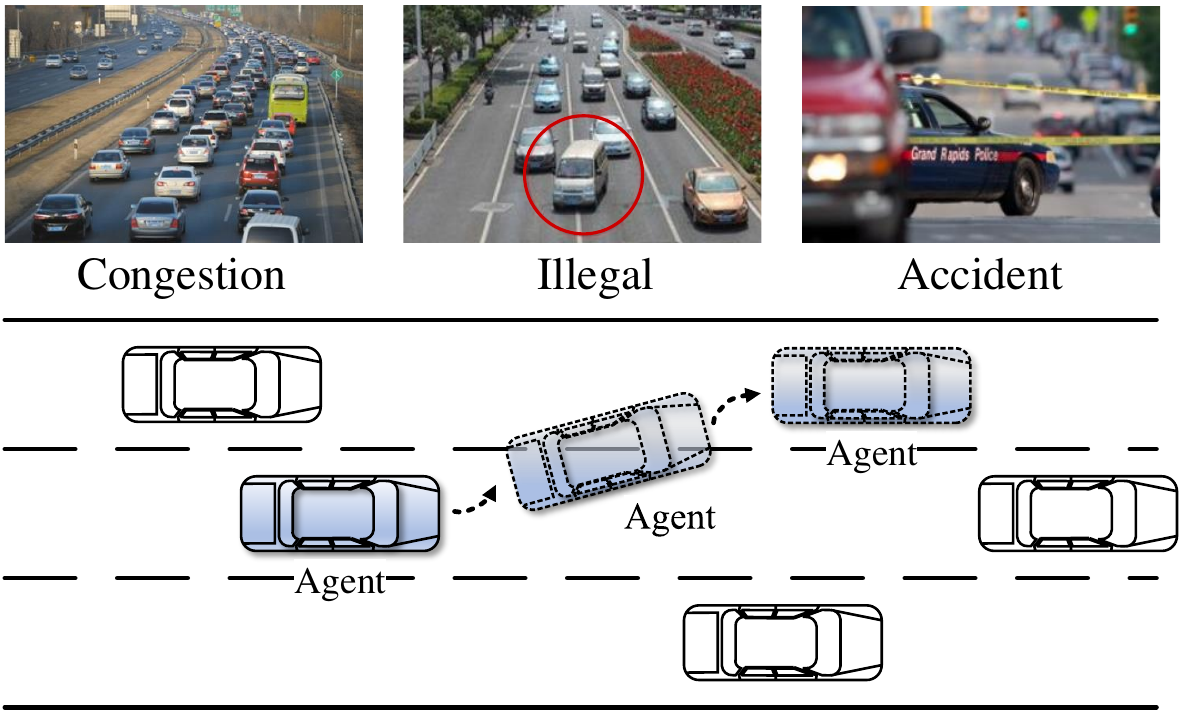}}
\caption{Lane-Changing Maneuver}
\label{fig1}
\end{figure}

Existing rule-based or statistical approaches require that the precise model be specified with a great deal of hand design and tuning. Such approaches may work only on a subset of traffic cases \cite{ardelt2012highly,suh2018stochastic}, and are not suitable for accomplishing many challenging real-life tasks. This is because our application problem has many objectives to achieve simultaneously, including reasoning about interactions with other vehicles, forming long-term strategy, safety analysis, and timely adaptation to traffic scenarios \cite{xu2018self}. For instance, the autonomous vehicle should be able to consider and predict the social behaviors -- \ie the driving styles and intentions of surrounding vehicles, in order to make a more rational decision. When all surrounding vehicles are treated equally, the lane-changing system is obliged to make very conservative decisions to reduce all possible collision risks \cite{xu2022multi}. However, based on real-world observations, the social behaviors are diverse and usually react differently to stimulus, which creates complex driving styles. As a result, understanding the underlying driving behaviours and styles of surrounding vehicles becomes necessary in enabling a safe and efficient lane-changing decision-making.

Recent progress in research on deep reinforcement learning (DRL) paves the way for possible alternative solutions which have outperformed humans on several game-playing tasks (e.g., Atari Games); it has also allowed revolutionary advances in the area of autonomous driving \cite{chen2019attention,ye2020automated}. However, directly applying any of the existing DRL methods to lane changing is still quite challenging because of the complicated environment under consideration, involving practical issues like how to obtain complete state information, predict the social behaviors, ensure the safety of the lane-changing policy, and adapt to new traffic scenarios quickly. 
The main contributions of this paper can be summarized as follows:

\begin{itemize}
\item[1)] We present the novel decision-making framework ``DRNet", a deep neural network (DNN) capable of learning a policy for making decisions at a tactical level. DRNet incorporates a prioritized replay buffer and a proposed Action-Subspace method, which makes decisions more efficient and safer, while using a convolutional neural network (CNN) to extract spatial features as the input.
\item[2)] We incorporates a prioritized replay buffer to makes decisions more efficient and explicitly provide an initialization method for the replay buffer. We also find an appropriate set of hyperparameters that achieves the best performance in terms of experience replay buffer size, discount factor, number of CNN hidden layers and neurons in fully connected layers.
\item[3)] We extensively evaluate the effectiveness and superiority of the proposed algorithm, by comparison with the state-of-the-art approach DDQN and rule-based method, in terms of diverse metrics.
\end{itemize}

The remainder of this paper is organized as follows: Section II reviews related research and introduces necessary background for DRL. Section III presents the system model and problem formulation. Our proposed approach DRNet is described in Section IV. Section V presents the simulation results. Finally, discussion and conclusions are given in Section VI.
\section{Related Work}
\subsection{Lane-Changing Decision-making}
Lane-changing decision-making is a challenging problem due to multiple surrounding vehicles and complex road environments. Existing methods for lane-changing decision-making can be divided into three categories: rule-based methods, statistical model-based methods, and machine learning-based methods.

A straightforward decision-making approach utilizes manually designed rules and relies on state machines to switch between predefined decision behaviors. The CMU’s team developed an autonomous vehicle, ``Boss", which determines the triggering of lane-changing behavior using thresholding and binary decisions \cite{Urmson2009Autonomous}. Similarly, in \cite{ardelt2012highly}, a hybrid, deterministic state machine is developed to define the superordinate driving behavior, with a decision tree that is used as a hierarchical decision-making process. However, while these methods may perform well under predefined operating conditions, they may be prone to failure when unexpected situations are encountered.

Some statistical models are also used for decision-making \cite{6107579,Gray2013Robust}, although the intrinsic utility of these models is for measuring how well the training data fit a model. To recognize lane-changing intention, a method which combined the Hidden Markov model (HMM) and Bayesian filtering techniques was proposed in \cite{Li2015Lane}. Other researchers, such as \cite{7225752,suh2018stochastic}, adopted the model predictive control (MPC) method, which employs a dynamic vehicle model to predict future states and determines an optimal control sequence for every step, aiming to optimize a predefined performance index that satisfies the control and state constraints. The major weakness of these methods is that they have difficulty covering all scenarios, due to the complexity of real-world traffic situations and limitations on training data. Furthermore, choosing the appropriate model and setting its parameters correctly is also a difficult task. 

The evolution of computing power and the increasing ubiquity of data have paved the way for applying machine learning-based approaches to decision-making. Machine learning-based approaches provide the possibility for vehicle adaptive learning and the improvement of future learning strategies on the basis of accumulated experience. The seminal work in this regard \cite{Pomerleau-1989-15721} used a neural network to map images directly to steering angles. Building on \cite{Pomerleau-1989-15721}, many other methods such as those in \cite{Chen2017End,wang2018reinforcement} have been proposed to obtain the proper steering angle. The work reported in \cite{Mirchevska2018High,hoel2018automated,ye2020automated} investigated the lane-changing problem for an autonomous vehicle driven by DRL. However, the state in these models was defined as a one-dimensional vector of the relative positions and speeds of surrounding vehicles, and the absolute velocity of the ego vehicle, which, from a statistical point of view, results in a loss of environment information. Also, none of them explicitly considers the efficiency of relearning when the scenes and vehicle distribution change.

The drawbacks of the existing work can be summarized as follows:
\begin{itemize}
\item[1)] Failure to tackle changing environments: A common problem with most existing methods is that they target one specific driving case. The disadvantages of these methods become obvious due to their poor generalization ability when dealing with traffic scenarios that have not been considered. It is depressing that most methods need to be rebuilt or relearned if the external scene and vehicle distribution changes significantly.
\item[2)] Lack of ``foresight": The traditional lane-changing decision-making approach easily gets bogged down in local optima such as being boxed in by surrounding vehicles. Autonomous vehicles should have the ability to reason about interactions with other agents and form an efficient long-term strategy.
\item[3)] Insensitivity to unsafe decision-making: An unsafe decision can cause a collision or increase complexity. Safety at all times is especially important when performing learning in real traffic, with imminent danger for other traffic participants.
\end{itemize}
\subsection{DRL}

Reinforcement learning (RL) has proven quite successful recently in solving complex sequential decision-making problems. It addresses the problem of an agent interacting with a local environment \textit{E} in discrete timeslots. As shown in Fig.\ref{figRL}, the environment block is a physical or dynamic system, e.g., a traffic environment, and the RL agent is a controller that continuously interacts with the environment. The iteration of interactions starts when the RL agent receives the state measurement $(s_t)$ of the environment, and then the agent responds by selecting an action $(a_t)$. After the environment executes the action, it generates the corresponding reward $(r_t)$ and a new state $(s_{t+1})$. On the agent side, action selection is given by a policy $\pi$ that defines a probability distribution over $a_t$ for each state $s_t$. This is usually modeled as a Markov decision problem (MDP) by defining a reward function $r(s_t, a_t)$. The goal of RL is to learn an optimal policy which maps from environmental states to agent’s actions by maximizing the cumulative reward of actions taken by the environment. Thus, the return from a state is replaced by the sum of discounted future reward, $R_t=\sum_{i=t}^T\gamma^{(i-t)}r(s_t, a_t)$, with a discount factor $\gamma \in [0, 1]$. We estimate the expected return $Q$ with given $s_t$ and $a_t$ by the following value function:
\begin{equation}
Q^\pi(s_t, a_t)=\mathbb{E_\pi}[R_t|s_t,a_t] \label{eq1}
\end{equation}
 
\begin{figure}[t]
\centerline{\includegraphics[scale=0.6]{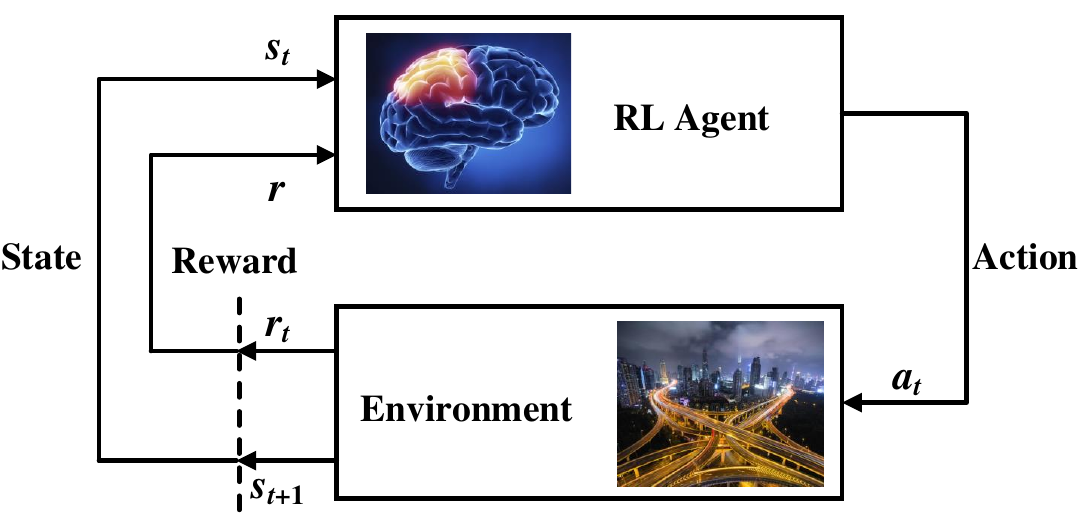}}
\caption{ Framework Diagram for Reinforcement Learning}\label{figRL}
\end{figure}

DRL can be considered as the “deep” version of RL, which takes advantage of the power of DNNs for representation learning. The earliest popular solution of DRL was Deep Q-Networks (DQN \cite{mnih2015human}), which can learn from a set of sequential frames to play many Atari games at a human performance level. To represent large state or action spaces in the learning of $Q$ values, DQN successfully combines RL with a DNN (e.g., a CNN) to approximate $Q^\pi(s_t,a_t)$. The DQN model is optimized by using stochastic gradient descent to minimize the loss $L$, as follows:
\begin{equation}
L(\theta_i)=\mathbb{E}\left[\left( r_t+\gamma maxQ{(s_{t+1},a_{t+1};\theta_i^-)}-Q{(s_t,a_t;\theta_i)}\right)^2\right]  \label{eq2} 
\end{equation}
where $\theta_i$ is the parameter of the online network at iteration $i$, and the term $\theta_i^-$ represents the parameters used to compute the target network, which is a periodic copy of the online network. The agent’s experiences $e_t=(s_t, a_t, r_t, s_{t+1})$ at each timeslot, stored in a data set $D_M={e_1, e_2,..., e_M}$, are used to train the Q-network. At iteration $i$, a minibatch of experiences $m$ is randomly sampled from $D_M$ to update the parameters $\theta_i$.

DeepMind presented Double DQN (DDQN) to overcome the overestimation that occurs in both Q-learning and DQN algorithms. This can be fixed by decomposing the max operation into action selection and action evaluation, which is the main idea of DDQN. The target used by DDQN is then
\begin{equation}
Y_t^{\rm{DDQN}}=r_t+\gamma Q{(s_{t+1}, \mathop{\arg\max}_{a_{t+1}} Q{(s_{t+1},a_{t+1}};\theta_i);\theta_i^- )}
\end{equation}

During learning, parameters are updated on samples from replay memory. The Q-learning update at iteration $i$ uses the following loss function:
\begin{equation}
L(\theta_i)=\mathbb{E}\left[\left(Y_t^{\rm{DDQN}}-Q{(s_t,a_t;\theta_i)}\right)^2\right]
\end{equation}

Like DQN, DDQN has two network architectures. The main network is used for evaluating the greedy policy, while the target network is used to estimate its value. Results show that DDQN outperforms DQN, and it is thus considered one of the state-of-the-art DRL approaches. However, directly applying DDQN to the problem under consideration does not work well either, since we are dealing with a multi-task (lane changing, safety verification and fast learning) scenario.

\section{System Model and Problem Formulation}
Consider an autonomous vehicle which must make an automated lane-changing decision in a target area. The area comprises three one-way lanes ($L$=3), with other traffic participants the autonomous vehicle has to avoid. Without loss of generality, we assume that there is a set $\mathcal{P}\triangleq \{p|p=1,2,,...,P\}$ of participants. Each participant is controlled by a simulation system using a complex set of rules based on expert knowledge. Let traffic density be $D_{lane} = \left\lbrace (1,d_1), (2,d_2), (3,d_3)\right\rbrace $ for lanes 1 to 3 (lane 1 is the leftmost lane). We assume that a road driving task lasts for $T$ timeslots and define the sensing capability of the autonomous vehicle as its sensing range $U$. Initially, the autonomous vehicle is deployed in a random lane $l$ with minimum velocity. Then, at each timeslot $t$, the vehicle makes the decision $a$, which is executed by the low-level controller. 

We model the dynamic evolution problem where the autonomous vehicle interacts with the environment, including surrounding vehicles and lanes, as an MDP. Roughly speaking, an MDP involves a decision agent that repeatedly observes the current state $s$ of the environment and takes an action $a$ among the available actions allowed in that state. The agent then switches to a new state $s'$ and obtains a reward $r$, all within a timeslot $t$. We define the MDP based on the following components:

1) \textbf{State} 

The state of the autonomous vehicle is constructed via vehicle sensors.  The information data provided by the sensors includes the positions and velocities of the autonomous vehicle and other participants. These data are combined and traffic conditions are represented in a grid form. To capture the relative motion of the traffic, we input a history of the occupancy grid from previous timeslots. Specifically, the state $s_t$ contains a three-layer binary grid at timeslots $t-2, t-2,$ and $ t$. As shown in Fig.\ref{state}, a layer of the state is composed of participants and the autonomous vehicle, indicating the lane position.
\begin{figure}[t]
\centerline{\includegraphics[scale=0.6]{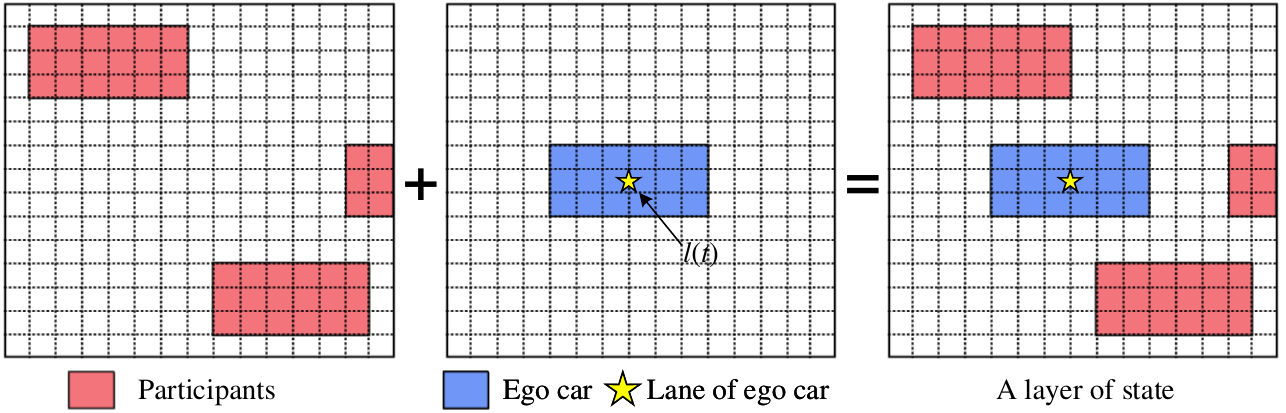}}
\caption{A layer of state representation}
\label{state}
\end{figure}

\begin{figure*}[t]
\centerline{\includegraphics[scale=0.45]{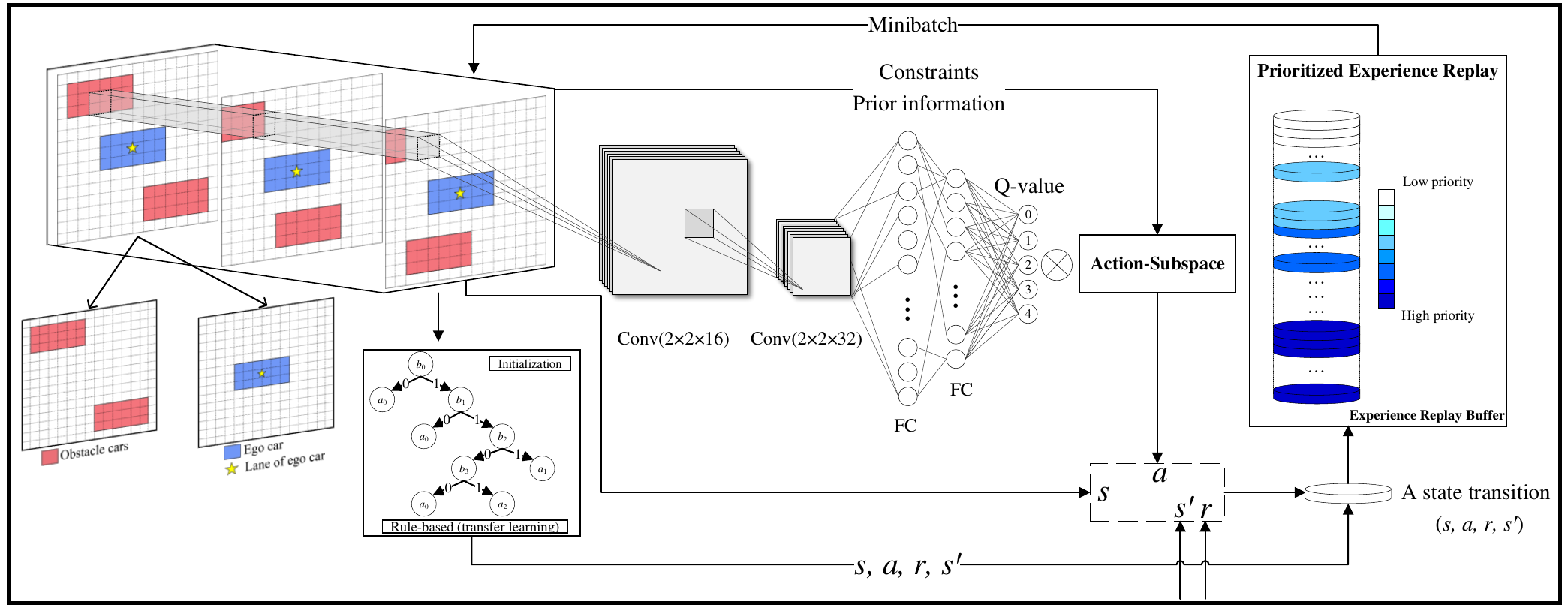}}
\caption{DRNet Framework }
\label{fig5}
\end{figure*}

2) \textbf{Action} 

For lane-change maneuvers, we break down the high-level decisions into the 5 actions given in Table.\ref{tab1}, which can be taken in any timeslot. The action space of the agent is thus discrete.
\begin{table}[htbp]
\caption{ Set of Actions}
\begin{center}
\resizebox{0.8\textwidth}{!}{
\begin{tabular}{cc}
\hline
\textbf{Decision}& \textbf{Action} \\
\hline
$a_0$& \textit{Stay in the current lane and maintain current velocity}  \\
\hline
$a_1$& \textit{Change lanes to the Left}\\
\hline
$a_2$& \textit{Change lanes to the Right}\\
\hline
$a_3$& \textit{Accelerate}\\
\hline
$a_4$& \textit{Decelerate}\\
\hline
\end{tabular}}

\label{tab1}
\end{center}
\end{table}

We have opted for discrete actions, rather than low-level controls like steering, because modularized systems have been reported to perform better in autonomous driving than end-to-end systems \cite{yurtsever2020survey}.

3) \textbf{Reward} 

Safety: The most important evaluation factor in the lane-change decision-making process is safety. The design of the reward function for safety is primarily considered in this study from two perspectives: avoiding collisions between the agent and other participants, and setting different desired lane-changing distances to represent different driving styles.

To avoid collisions with surrounding vehicles, a large penalty is imposed on the agent if a lane-change decision results in a collision. The specific design is as follows:
\begin{equation}
r_{c}=-\lambda_c \cdot c_t, \quad c_t=\left\{  
\begin{array}{lr}  
             1,  \textit{ if collision}\\
             0,  \textit{ else}   
             \end{array}  
\right.
\end{equation}
where $\lambda_c>0$ is a weight coefficient for the collision between the agent and other participants and $c_t$ is the detection mark for the collision.

Subsequently, different driving styles can be characterized by setting different desired lane-changing distances from the vehicles ahead. When the distance between the agent and the vehicle in front is less than the desired distance, the penalty value will gradually increase with the increase in the desired distance. Formally, we define the following penalty according to the distance:
\begin{equation}
r_d=-\lambda_d |d-d_{des}|\cdot l_t, \quad l_t=\left\{  
\begin{array}{lr}  
             1,  \textit{ lane changing}\\
             0,  \textit{ else}   
             \end{array}  
\right.
\end{equation}
where $\lambda_d>0$ is a normalizing coefficient for lane-changing distance to make the range of $\lambda_d \left| d-d_{des}\right|$ to $\left[0,1\right]$. $d$ and $d_{des}$ describe the distance of the agent from the vehicle in front and the desired lane-change distance, respectively. $l_t$ is the detection mark for lane changing, to avoid meaningless lane-changing behavior. A conservative driving style is one in which the agent makes a lane-changing decision only when there is a wide gap between the agent and the vehicle in front, while an aggressive driving style is one in which the agent is instructed to change lanes even when the distance between the agent and the vehicle in front is very small. Thus, we can use the reward function to describe various driving styles by setting different desired lane-changing distances.

Efficiency: For efficiency, the autonomous vehicle must try to meet the requirements of driving as fast as possible without exceeding the appropriate speed limit, and without changing lanes frequently. To reward the above behavior, the reward function is defined as follows:

\begin{equation}
r_e=-\frac{\lambda_e \left|v_{t}-v_{des}\right|}{\beta_t}\label{eq3}
\end{equation}
where $v_{des}$ is the desired velocity (not the maximum speed limit) and $\lambda_e>0$ is a normalizing coefficient to make the range of $\lambda_e \left| v_{t}-v_{des}\right|$ to $\left[0,1\right]$. $\beta_t$ represents punishment for frequent lane changing. When lane-changing decisions are made in consecutive timeslots, $\beta_t$ is set to a constant between 0 and 1. Otherwise, it is assigned a value of 1.

Based on analysis and consideration of  safety and efficiency, the total reward function of the agent can be expressed as follows:
\begin{equation}
r=r_c+r_d+r_e
\end{equation}
\section{Proposed Solution: DRNet}
In this section, we present our proposed solution DRNet for policy-making in the context of the tactical lane-changing problem. The proposed neural network framework, shown in Fig.\ref{fig5}, can be divided into three parts.
\subsection{Feature Extraction and Driving style Identification}

Here we use an image with three layers to represent the observations at each timeslot, with the first two layers representing the history of 2 previous timeslots, as shown in the upper left part of Fig.\ref{fig5}. We utilized CNN to better exploit features in the image that can help the DRL agent (autonomous vehicle) make better decisions, while fully considering (a) the relative positions of the DRL agent and other participants, (b) the distribution of surrounding traffic participants, and (c) the correlation of traffic conditions and the decision predictability of the DRL agent.

The three layers represent the historical and current information of traffic, which is crucial for understanding the current state and predicting future traffic behavior. We use the three layers to work on the classification of driving styles based on observable behaviors, such as the speed of the vehicle relative to others, lane-changing patterns, and braking behavior.

To identify the driving style of each vehicle, we we utilize a combination of Convolutional Neural Networks (CNNs) and Support Vector Machines (SVMs) that analyze the data represented in the image. The classification into aggressive, normal, and cautious driving styles is based on a set of predetermined criteria.

\paragraph{Convolutional Neural Networks (CNNs):}
Initially, a CNN is applied to extract relevant features from the image data, which encompasses spatial relations and movement patterns of vehicles. The CNN architecture comprises several convolutional layers followed by pooling layers. Mathematically, the convolution operation in layer $l$ can be represented as:
\begin{equation}
F_{l} = \sigma(W_{l} * F_{l-1} + B_{l})
\end{equation}
where \( F_{l-1} \) is the feature map from the previous layer, \( W_{l} \) and \( B_{l} \) are the weights and biases of the current layer, \( * \) denotes the convolution operation, and \( \sigma \) represents the non-linear activation function, such as ReLU.

\paragraph{Support Vector Machines (SVMs):}
The feature map obtained from the final layer of the CNN is then fed into an SVM for classification. The SVM is trained to classify the driving styles into three categories based on the extracted features. The decision function of the SVM can be represented as:
\begin{equation}
y = \text{sign}(\sum_{i=1}^{n} \alpha_i y_i K(x_i, x) + b)
\end{equation}
where \( y \) is the predicted class, \( x \) represents the input feature vector, \( \{x_i, y_i\}_{i=1}^{n} \) are the training samples with their corresponding labels, \( \alpha_i \) are the Lagrange multipliers, \( K \) is the kernel function, and \( b \) is the bias term. In our case, a Radial Basis Function (RBF) kernel is utilized for its effectiveness in handling non-linear data.

\paragraph{Integration into the DRL Framework}
The classification of driving styles is integrated into the overall Deep Reinforcement Learning (DRL) framework. The output from the SVM, indicating the driving style of each vehicle in the vicinity of the autonomous agent, is incorporated into the state representation. This enriched state information allows the DRL agent to make more informed decisions, taking into consideration not only the current traffic conditions but also the predicted behavior patterns of surrounding drivers.

\subsection{Initialization and Sampling Transitions}
The state transition --- the atomic unit of interaction in DRL, shown by the flat cylinders in Fig.\ref{fig5} --- includes the observation, the action and the reward given by the environment, which are stored in an experience replay buffer. Using a replay memory entails two design choices: which experiences to store, and which experiences to replay (and how to do so). Concerning storage, the DRL agent interacts with the environment and populates the experience. When this buffer is full, the system replaces the oldest transition with the new one. In the early period of learning, the experience replay buffer is full of unnecessary transitions because the DRL agent is starting from tabula rasa. This leads to long iterations of updates to make transitions meaningful \cite{Sutton1998Reinforcement}.

To address this, a transfer learning technique that integrates the rule-based method is embedded in DRNet to generate significant transitions in the replay memory initialization. As shown in the lower left part of Fig.\ref{fig5}, we first generate a decision tree (DT) for a simple lane-changing policy, in which only the first three actions $\left(  a_0, a_1, a_2 \right)  $ are considered. This decision tree is binary, in that every non-terminal node corresponds to a binary split on a discrete value True or False $\left\lbrace 1,0\right\rbrace $. We then use the DT to predict actions and transfer the transitions to the experience replay buffer. This DT is only used to initialize experience replay; subsequent transition updates are implemented by the neural network. With the help of initialization, the deep network can learn more quickly. Table.\ref{tab2} provides a list of non-terminal nodes.

\begin{table}[htbp]
\caption{List of Non-terminal Nodes }
\begin{center}
\begin{tabular}{cc}

\hline
\textbf{Node}& \textbf{Explantation} \\
\hline
$b_0$& \textit{Whether there is a vehicle in front of the ego car}  \\
\hline
$b_1$& \textit{ \makecell[c]{ Whether the ego car has a high speed \\ or is in a car-following state}}\\
\hline
$b_2$& \textit{Is the left lane empty?}\\
\hline
$b_3$& \textit{Is the right lane empty?}\\
\hline

\end{tabular}

\label{tab2}
\end{center}
\end{table}

As can be seen from the decision tree, when both left and right lanes meet the condition for lane changing, the left lane is preferred, on the assumption that vehicle density in the left lane is lower than in the right lane.

In our case, since the traffic scenarios are represented in a binary occupancy grid, the state space is enumerable, resulting in enumerable state transitions. To increase the independence of each state transition, the experience replay buffer is usually assigned a vast volume, so that there is a slim chance of a transition being chosen through random sampling in a minibatch. Furthermore, important transitions that record DRL agent obstacle avoidance or other ``smart" cases are hidden amid massive numbers of unwise cases like changing lanes early due to the anticipation of congestion ahead. Therefore, we include a prioritized experience replay mechanism\cite{Schaul2015Prioritized} to pick a minibatch of experience to make the most effective use of the replay memory for learning. In this way, each state transition is given a priority (TD-error) which will be higher if the state transition is more important, leading to a larger chance of being sampled. For each transition, we compute its TD-error as follows:
\begin{equation}
\delta_t=r_t+\gamma Q{(s_{t+1}, \mathop{\arg\max}_{a_{t+1}} Q{(s_{t+1},a_{t+1}}))}-Q{(s_t,a_t)}
\label{td}
\end{equation}

\subsection{Action-Subspace}
In a typical DQN, a mapping between states and Q-values associated with each action is learned. Then, a max (or soft-max) operator can be applied on the output layer of Q-values to pick the best action \cite{mnih2015human}. However, this technique is not capable of guaranteeing the safety of the agent at all times. To ensure safe lane-changing behavior, we propose a method, Action-Subspace, which is applied on the output Q-values before performing the max operation, as shown in Fig.\ref{fig5}. The direct effect of this is that when performing the max operation to choose the best action, we consider the Q-values associated with only a subspace of actions, which are dictated by prior knowledge and safety information from the observation \cite{xu2022data}.

At arbitrary timeslot $t$, we introduce a weight vector \textit{W} for the action set $A$, where $\textit{W}=<w_{0t},w_{1t},...w_{kt},...w_{4t}>, 0\leq k\leq 4$, satisfying
\begin{equation}
w_{kt}=\left\{  
\begin{array}{lr}  
             0,  \textit{ potentially dangerous action}\\
             1,  \textit{ safe action}   
             \end{array}  
\right.\label{eq4}
\end{equation} 
where $w_k$ represents the safety contribution of action $a_k$ for decision-making and can be determined by low-level control module information.
\begin{prop}[Safe Action Subspace]
The subspace of safe actions $A_{sub}(t)\subseteq A$  is defined as
\begin{large}
\begin{equation}
A_{sub}(t):=\{a_k\in A|w_{kt}=1, k=0,1,...,4\}
\nonumber
\end{equation}
\end{large}
\end{prop}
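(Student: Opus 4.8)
The plan is to treat this statement for what it really is --- essentially a definition --- and to verify the two things that would make it a meaningful lemma rather than a tautology: that $A_{sub}(t)$ is well defined at every timeslot, that the asserted containment $A_{sub}(t)\subseteq A$ holds, and that $A_{sub}(t)$ is never empty, so that the $\max$ operation applied afterwards in the Action-Subspace module always returns a safe action.

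First I would establish well-definedness and the containment. By Eq.~\eqref{eq4} the weight vector $W=\langle w_{0t},\dots,w_{4t}\rangle$ is produced by the low-level control module, which assigns to each index $k\in\{0,1,\dots,4\}$ exactly one of the values $0$ or $1$. Hence the predicate ``$w_{kt}=1$'' is decidable for every $k$, and the comprehension $\{a_k\in A\mid w_{kt}=1\}$ picks out a uniquely determined subset of the finite action set $A=\{a_0,\dots,a_4\}$ of Table~\ref{tab1}. The containment $A_{sub}(t)\subseteq A$ is then immediate, since by construction every element of the comprehension is one of the $a_k\in A$.

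Next I would argue non-emptiness, which I expect to be the only substantive step. The claim is that for every state $s_t$ the agent can reach without already being in a collision, there is at least one $k$ with $w_{kt}=1$. I would prove this by exhibiting a fallback action that is always certified safe: when no conflict with the leading vehicle is imminent, $a_0$ (stay in lane and maintain velocity) introduces no new safety violation, so $w_{0t}=1$; when a longitudinal conflict is detected, $a_4$ (decelerate) can only reduce the closing speed and therefore cannot be flagged as dangerous, so $w_{4t}=1$. These two cases are exhaustive over collision-free states, so $a_0\in A_{sub}(t)$ or $a_4\in A_{sub}(t)$, whence $A_{sub}(t)\neq\emptyset$.

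The hard part will be making the ``fallback action is safe'' step rigorous, since it hinges on the semantics of $w_{kt}$, \ie on precisely what the low-level controller certifies; a fully formal argument would tie $w_{kt}$ to a reachability or time-to-collision condition on the vehicle dynamics rather than treating it as a black-box label. For the scope of this paper it suffices to assume the low-level module is \emph{sound} --- it never labels a genuinely unsafe action safe --- and that it always certifies at least one of $a_0,a_4$ outside of collision states; under this assumption the proposition follows directly from the observations above, and the Action-Subspace operator is guaranteed to perform its $\max$ over a nonempty pool of certifiably safe actions at every timeslot $t$.
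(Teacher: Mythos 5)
Your proposal is correct, and for the part that corresponds to what the paper actually proves it takes essentially the same (essentially definitional) route: the paper's entire argument is to reformulate the action set as $A^\ast=\{w_{0t}\cdot a_0,\dots,w_{4t}\cdot a_4\}$ and invoke Eq.~(\ref{eq4}), which is just your well-definedness-plus-containment observation phrased through an elementwise product instead of a set comprehension. Where you genuinely diverge is the non-emptiness argument: the paper never states, let alone proves, that $A_{sub}(t)\neq\emptyset$, even though the subsequent $\max$ over $A_{sub}(t)$ silently depends on it. Your fallback-action argument (either $a_0$ or $a_4$ is always certified safe outside of collision states) is the right idea for closing that gap, and you are right that it cannot be derived from the paper's text alone, since $w_{kt}$ is only characterized as ``determined by low-level control module information''; your explicit soundness-and-liveness assumption on that module is exactly the hypothesis the paper would need to add to make the Action-Subspace mechanism well founded. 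In short, your proof buys a guarantee the paper's does not, at the cost of one clearly flagged extra assumption; the paper's proof buys nothing beyond restating the definition.
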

\begin{proof}	
Due to the weight vector $\textit{W}$ introduced to represent the safety importance of actions, the set of actions $A$ should be reformulated as $A^\ast=\{w_{0t}\cdot a_0, w_{1t}\cdot a_1,...,w_{kt}\cdot a_k,...,w_{4t}\cdot a_4  \}$. Combining this with Eq.\ref{eq4}, the soundness of Prop.1 is proven.
\end{proof}
The agent eventually learns a safe action subspace that provides benefits. For example, in the lane-changing problem, if the autonomous vehicle is already in the leftmost lane and approaches the vehicle ahead at a high speed, performing a Left or Accelerate action will result in traffic violations and may even lead to an accident. Therefore, $A_{sub}=\lbrace a_0, a_2, a_4\rbrace$, which excludes $a_1$ ($\textit{Change lanes to the left}$) and $a_3$ ($\textit{Accelerate}$) due to $w_1=w_3=0$. This enables us to incorporate prior knowledge about the environment directly into the learning process. Also, since we do not have to set up extra negative rewards and the agent does not explore these states, learning itself becomes faster and more data-efficient.  

In order to encounter as many diverse situations as possible during the transition update phase, we do not rely entirely on the positive samples collected from Action-Subspace. When the action corresponding to the maximum Q-value is outside the $A_{sub}$, it is also collected in a transition sample and a negative reward $r=-1$ is given to the agent.
% \begin{figure}[t]
% \centerline{\includegraphics[scale=0.26]{simu.jpg}}
% \caption{Simulator}\label{simu}
% \end{figure}
\subsection{Algorithm Descriptions: DRNet}
Pseudocode for our approach DRNet is presented in Algorithm \ref{alg:A}. It includes the training process and environment interactions. After adequate training, the model (i.e., the parameters in those DNNs) is saved for testing.

Our solution works as follows. At the beginning, we initialize minibatch $k$, step-size $\eta$, replay period $K$, budget $T$, batch $H$ and buffer $B$ (Line 1-2). For the DRL agent, we initialize the online network $Q(s_t,a_t;\theta_i)$ and the target network $Q(s_t,a_t;\theta_i^-)$ with randomly initialized weights $\theta_i$ and $\theta_i^-$, respectively (Line 3). To improve learning efficiency, we use the decision tree for high-quality initialization of experience replay (Lines 4-7). The training process starts when the number of transitions collected in $B$ is sufficient for sampling. We first sample a prioritized minibatch of transitions by using prioritized experience replay (Line 12, and in Section IV.B). After training, the transition priorities are updated by calculating the new TD-error (Line 14) and the network weights then get updated (Line 17). Last, we determine the Action-Subspace based on the state information and choose the action $a_t$ with $\varepsilon$-greedy (Line 20, and in Section IV.C). Every time the replay buffer $B$ is full, the oldest experience will be removed (Lines 21-22).

\begin{algorithm}[t] 
\caption{Proposed Solution: DRNet}  
\label{alg:A}  
\begin{algorithmic}[1]  
\STATE {\textbf{Input:} minibatch $k$, step-size $\eta$, replay period $K$, budget $T$;}
\STATE{Initialize batch $H=\emptyset$, buffer $B=\emptyset$, $\triangle=0$;}
\STATE{Initialize the online network $Q(s_t,a_t;\theta_i)$ and target network $Q(s_t,a_t;\theta_i^-)$ with weights $\theta_i$ and $\theta_i^-$; }
\FOR{$j=1$ \textbf{to} $H$}
	\STATE{Observe $s_j$ and choose $a_j\sim\pi_\theta(s_j)$  by decision tree;}
	\STATE{Observe $s_{j+1}$, $r_j$;}
	\STATE{Store transition ($s_{j}, a_{j}, r_j, s_{j+1}$) in $H$; }
\ENDFOR
\FOR{episode, $i=1,2,...,M$}
	\FOR{$t=1$ \textbf{to} $T$}
		\IF{$t\equiv 0$ mod $K$}
			\STATE{Sample a prioritized minibatch of transitions, using prioritized experience replay;}
			\FOR{$m=1$ \textbf{to} $k$}
				\STATE{Update the priorities $\delta_m$ by calculating new TD-error, using Eq.\ref{td}; }
				\STATE{Accumulate weight-change $\triangle\leftarrow \triangle + \delta_m\cdot\nabla_{\theta}Q(s_m, a_m)$}
			\ENDFOR
			\STATE{Update weights $\theta_i\leftarrow \theta_i+\eta\cdot\triangle$, reset $\triangle=0$}
			\STATE{From time to time copy weights into target network $\theta_i^-\leftarrow\theta_i$}
	\ENDIF	
	\STATE{Determine the Action-Subspace based on the state information, and choose action $a_t\sim\pi_\theta(s_t)$ with $\varepsilon$-greedy}
	\IF{$B$ is full}
		\STATE{Remove oldest experience from replay buffer $B$}
	\ENDIF
	
	\ENDFOR

\ENDFOR
 
\end{algorithmic}  
\end{algorithm}

\section{Performance Evaluation}
% \subsection{Simulator}
% During the development process and for validation of the algorithms, the simulation environment plays a crucial role. In this study, all other traffic participants are controlled by a simulation environment developed by the Institute for UISEE using an adaptive cruise controller (IDM \cite{treiber2000congested}), and a safe lane-change algorithm called MOBIL \cite{kesting2007general}. Moreover, the localization, velocity and sensor fusion data (including other cars’ localization and velocity) of the DRL agent are also provided by the simulator. The agent that is entirely controlled by the simulation is referred to as the rule-based agent throughout this paper. The DRL agent does not have any prior information on the intentions of the other vehicles or the logic of the underlying system. A screenshot of the simulator is shown in Fig.\ref{simu}.

\begin{figure}[t]
\centerline{\includegraphics[scale=0.6]{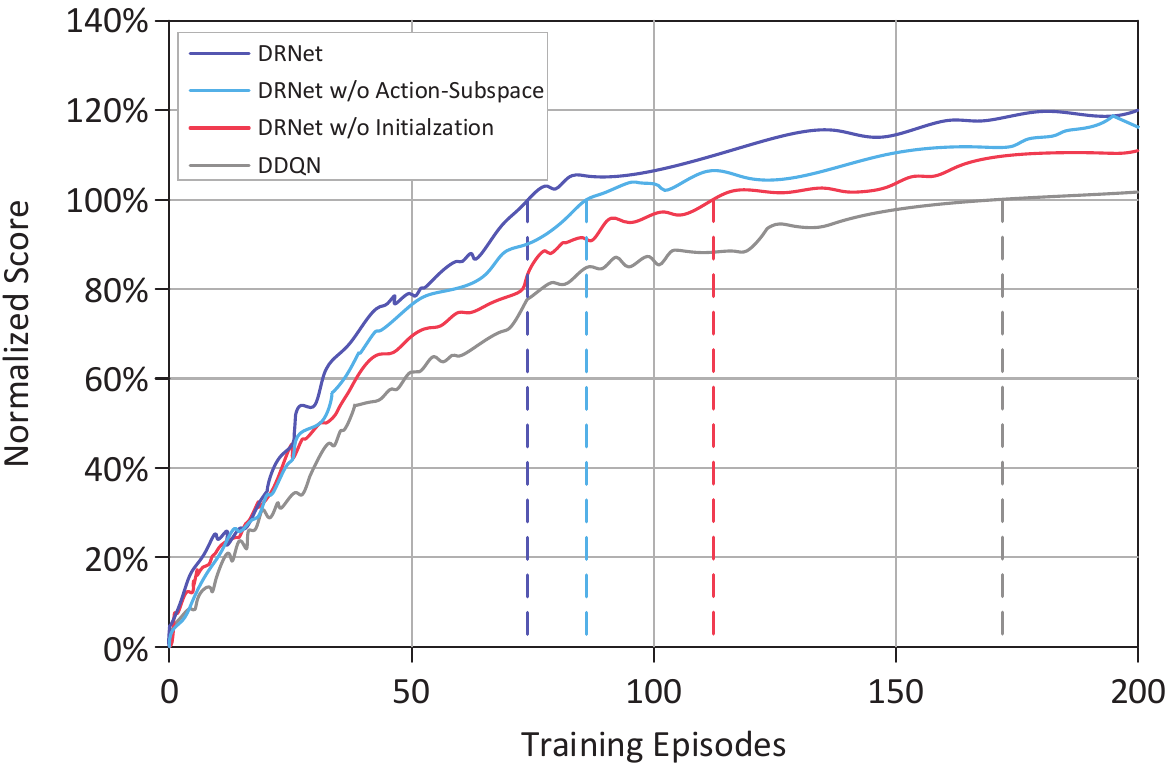}}
\caption{Summary plots of learning speed}
\label{learningrate}
\end{figure}
\subsection{Experimental Setting}
In our simulation, the size of the occupancy grid is set to 30$\times$15 and the visibility of the autonomous vehicle is 20m in front and 10m in back with a longitudinal discretization of 1m per cell (all cars are 6m in length, 3m in width), and 1 lane to the left and right with 5 one-cell-per-lane discretizations in the lateral direction. We consider the grid of 30$\times$15 as a unit and the autonomous vehicle requires a distance of 8193 meters to complete its journey in the simulation environment, which is seen as an episode of our training process. In total, our training process is composed of 200 episodes. We set traffic density to $D_{lane} = \left\lbrace (1,0.1), (2,0.3), (3,0.5)\right\rbrace $ for lanes 1 to 3. The traffic density increases from left to right, in order to give faster, sparse traffic in the leftmost lanes and slower, dense traffic in the rightmost lanes. Such traffic conditions ensure that non-trivial lane-change maneuvers, like merging and overtaking, would be necessary to remain efficient. We set $v \in [10,80] $, $v_{p} \in [20,60]$, $v_{des}=75$ and $\beta_t=0.7$. Table.\ref{setup} summarizes all experimental setting parameters used.
\begin{table}[h]
\caption{Experimental Setting Parameters}
\begin{center}
\resizebox{0.8\textwidth}{!}{
\begin{tabular}{lc}
\hline
\centering \textbf{Parameter}& \textbf{Explanation} \\
\hline
Number of lanes, $L$& 3 \\
\hline
Velocity of DRL agent, $v$ & $v\ \in [10,80]$\\
\hline
Velocity of participants, $v_p$ & $v_p \in [20,60]$\\
\hline
Desired velocity of DRL agent, $v_{des}$ & $v_{des}=75$\\
\hline
Penalty for frequent lane-changing, $\beta_t$  & $\beta_t=0.7$\\
\hline
Traffic density, $\lbrace d_1, d_2, d_3\rbrace$  & $d_1=0.1, d_2=0.3, d_3=0.5 $\\
\hline
\end{tabular}}
\label{setup}
\end{center}
\end{table}

We use the following four metrics to measure the performance: average velocity ($\overline{v}$); safety ratio ($S$: the ratio of the number of test episodes without collisions to the total number of test episodes); average lane-changing times ($\overline{Lc}$); and, most importantly, decision-making efficiency ($\sigma$), calculated as follows:
\begin{equation}
\sigma=\frac{\overline{v} \cdot S}{\overline{Lc}}\label{ef}
\end{equation}

\begin{figure*}[h]
\centering
\subfigure[]{ 
\includegraphics[width=3in]{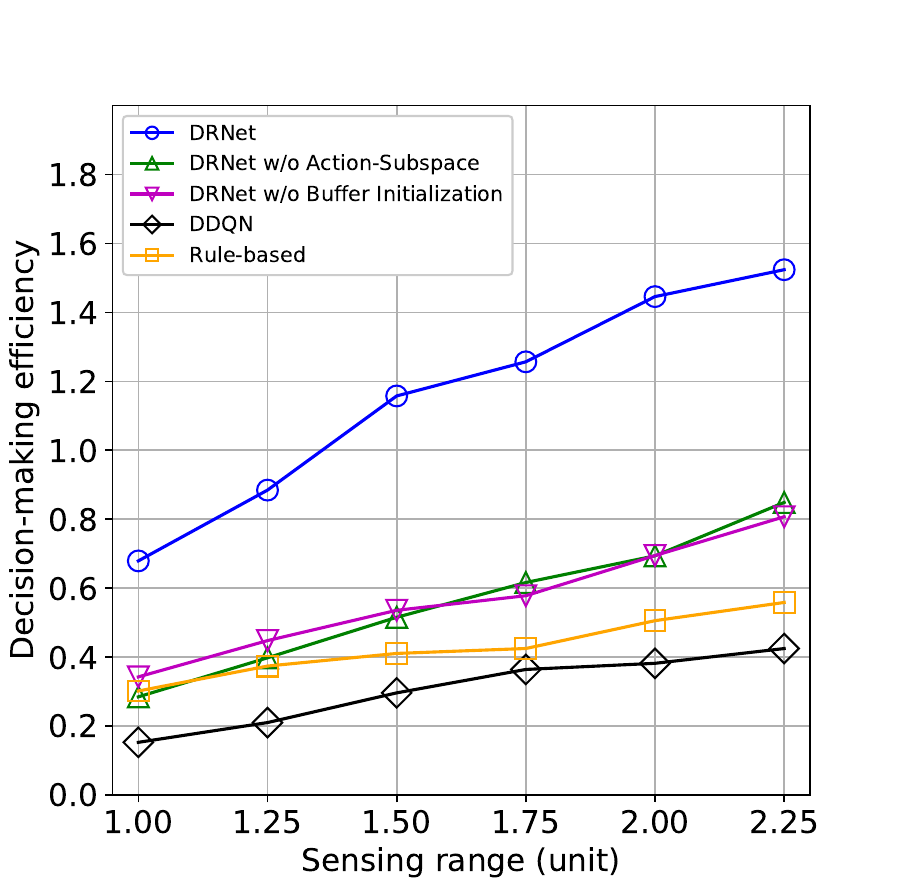}}
\subfigure[]{
\includegraphics[width=3in]{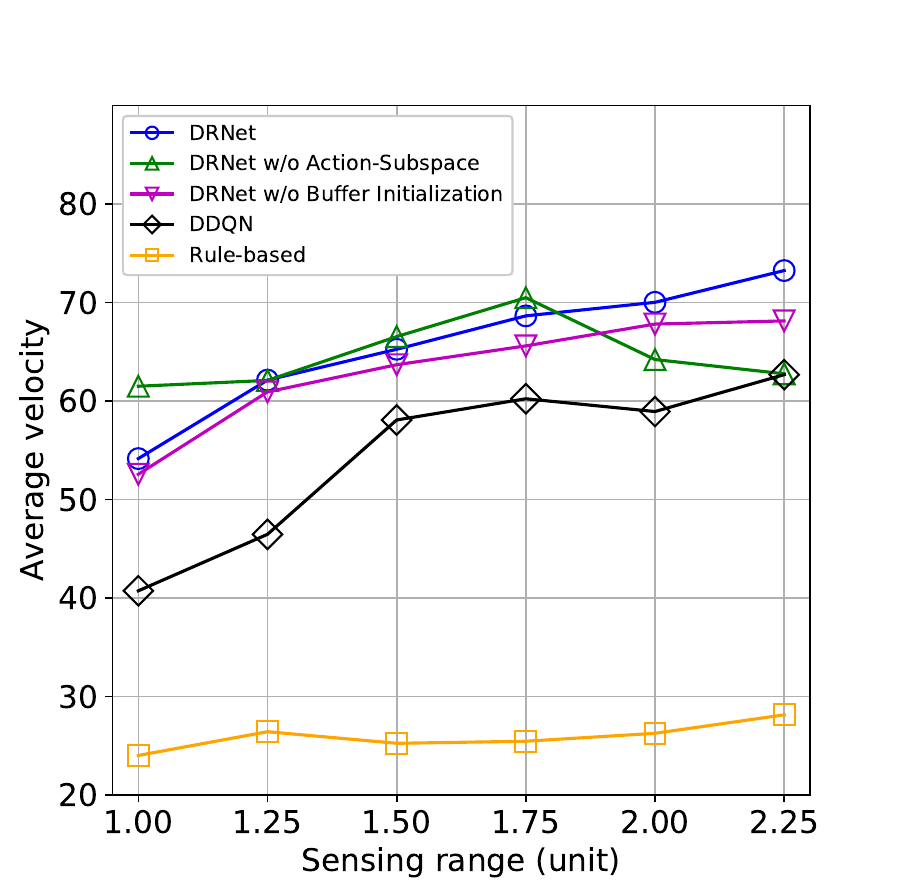}}
\subfigure[]{ 	
\includegraphics[width=3in]{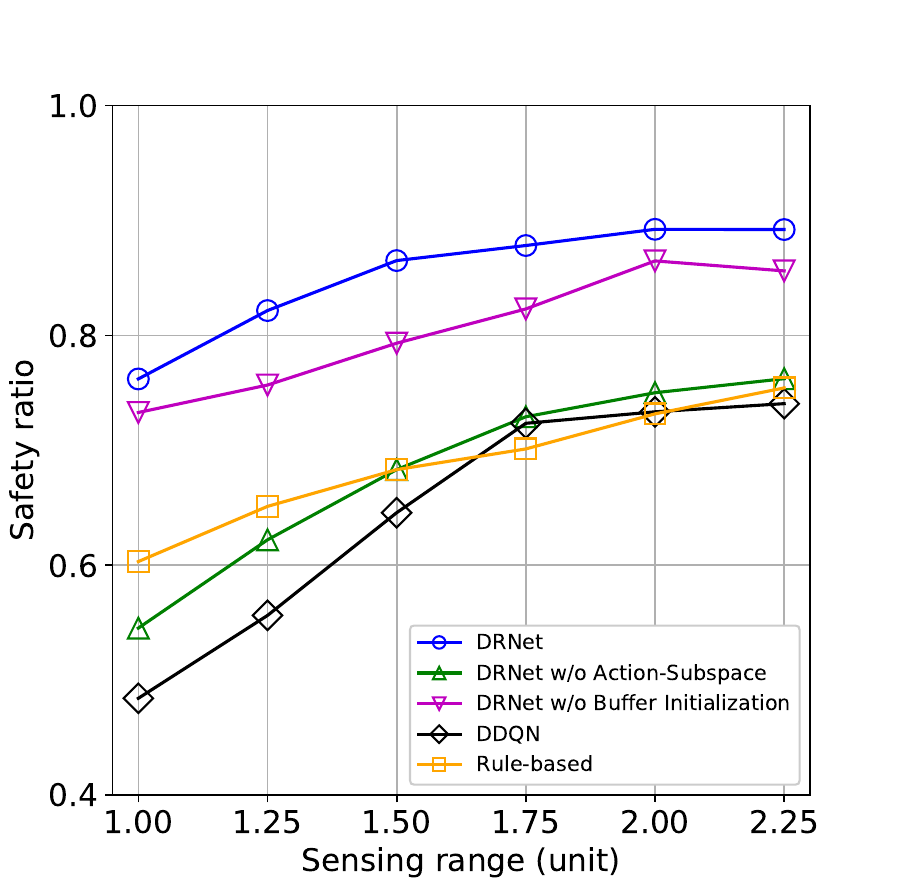}}
\subfigure[]{
\includegraphics[width=3in]{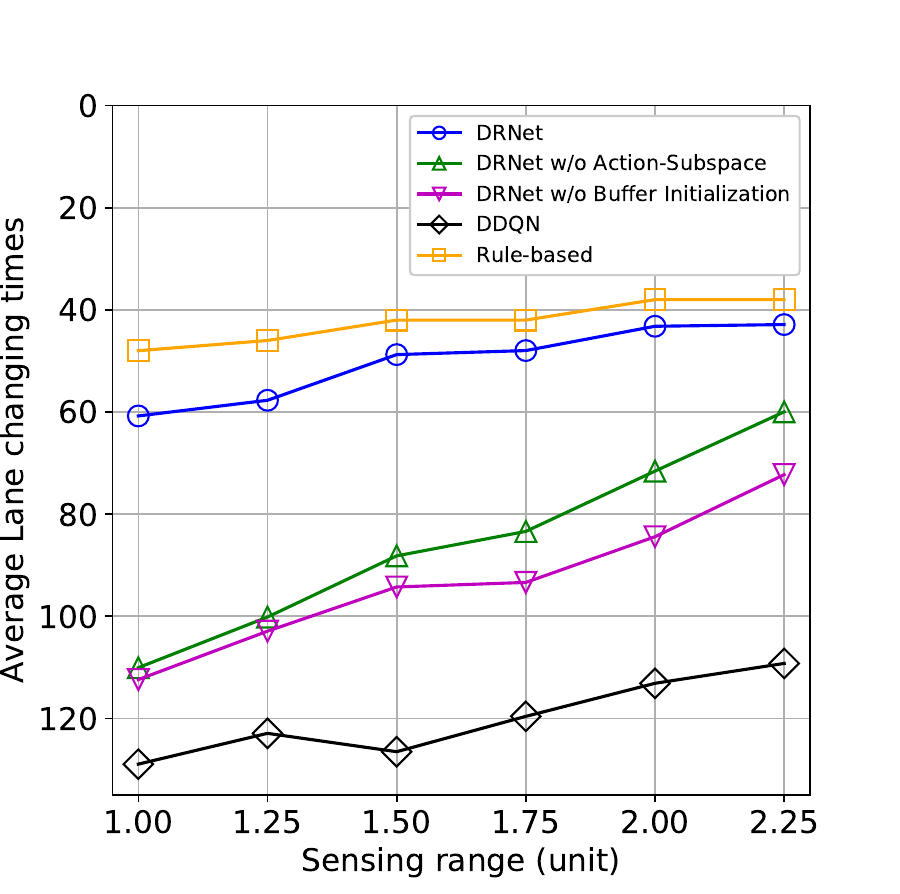}}
\caption{Impact of sensing range on four metrics. }
\label{impact}
\end{figure*}
\begin{figure*}[h]
\centering
\subfigure[]{ 
\includegraphics[width=3in]{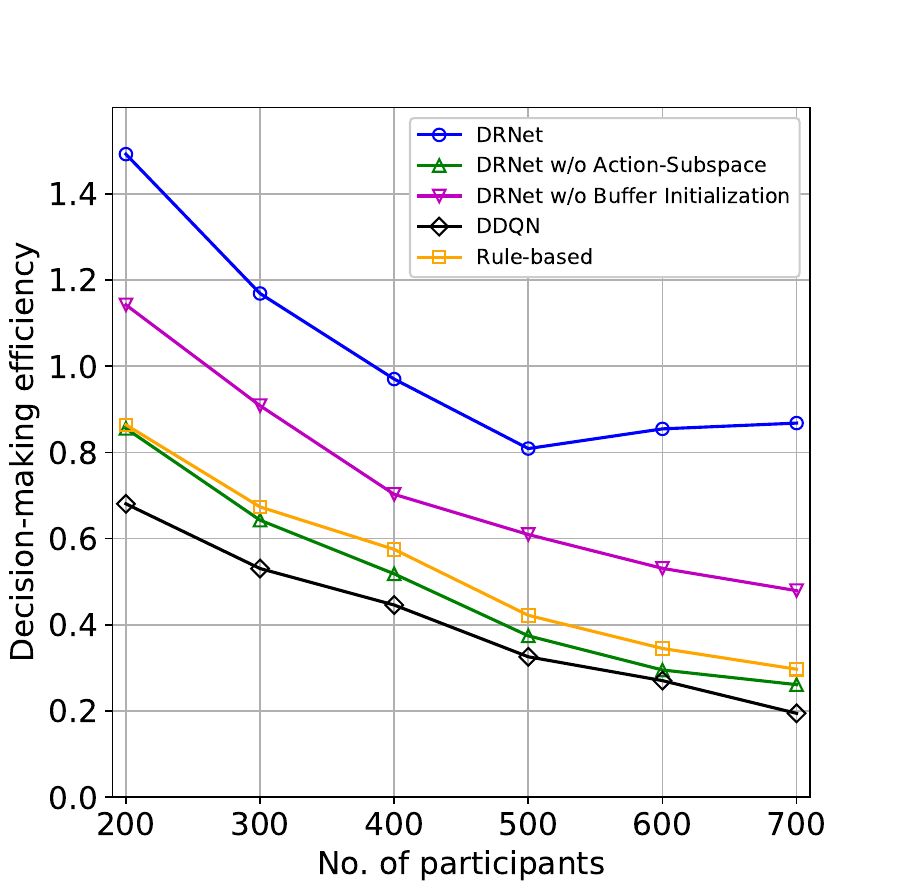}}
\subfigure[]{
\includegraphics[width=3in]{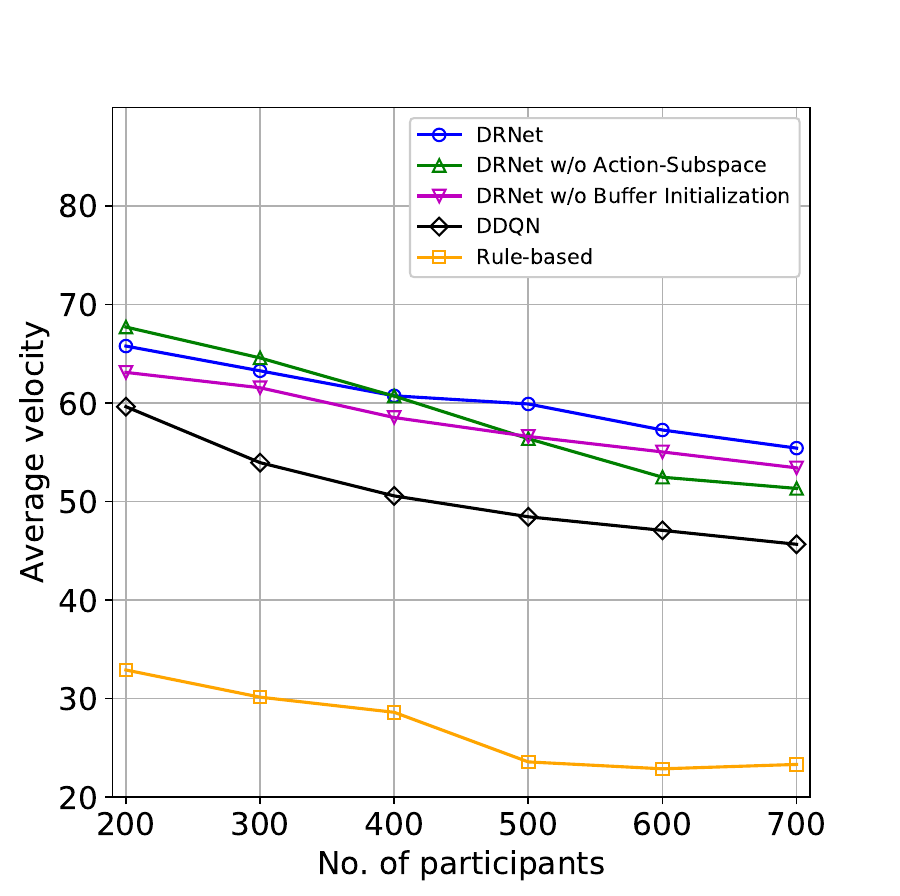}}
\subfigure[]{ 	
\includegraphics[width=3in]{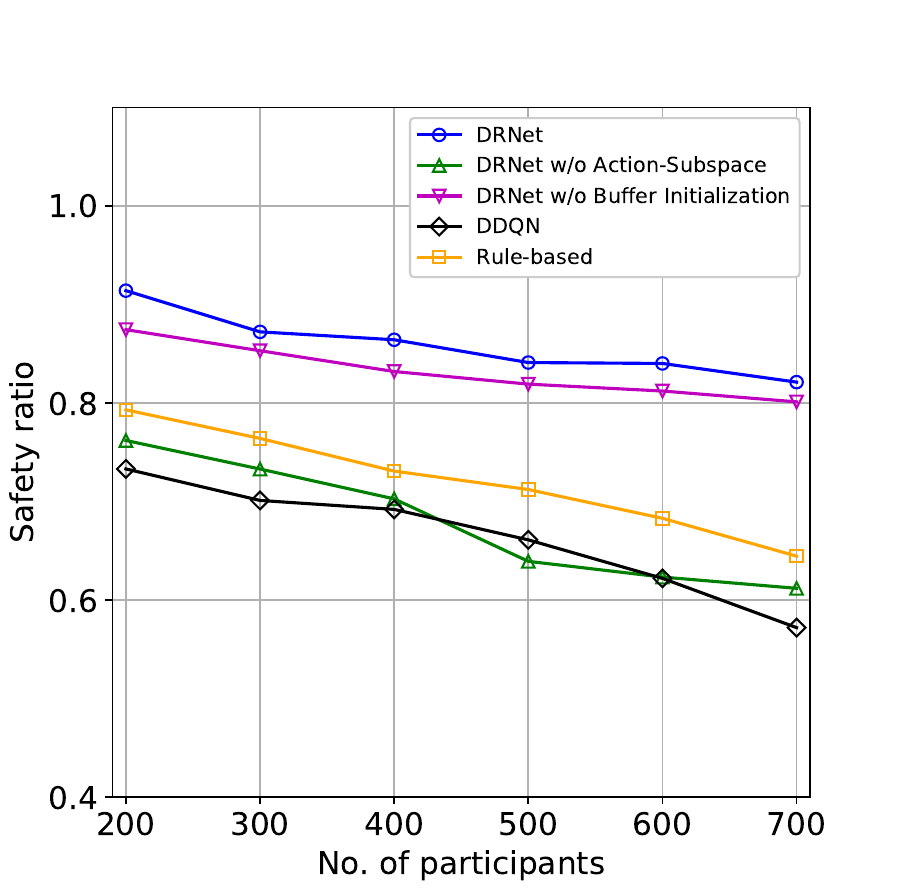}}
\subfigure[]{
\includegraphics[width=3in]{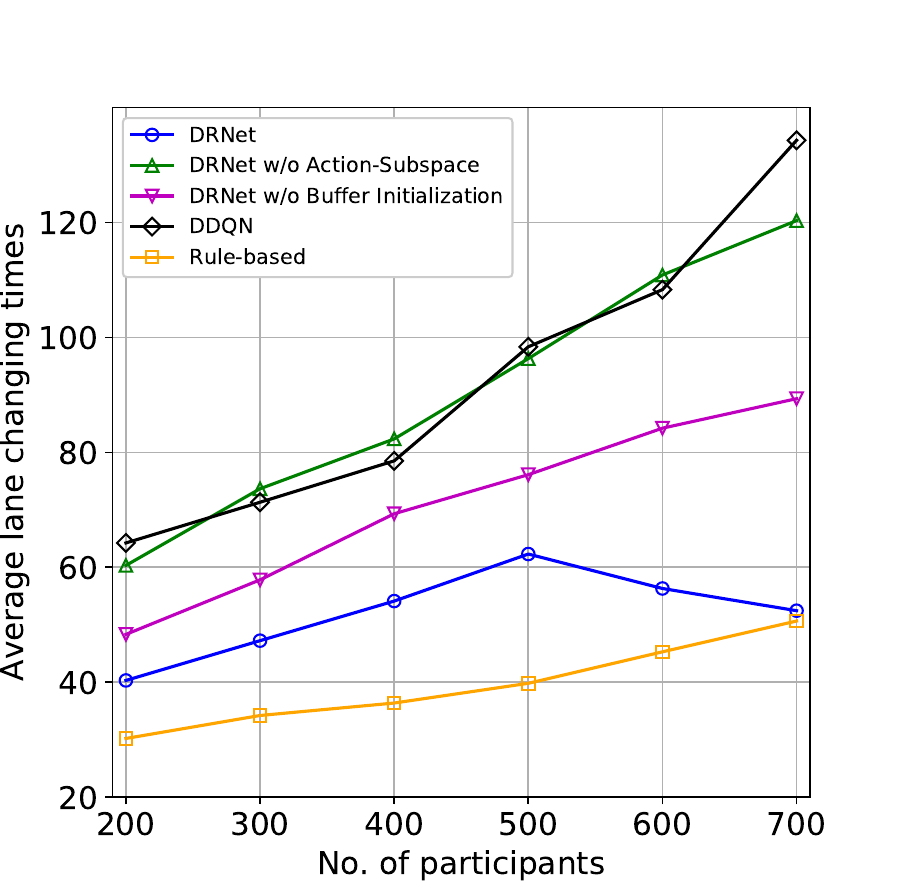}}
\caption{Impact of number of participants on four metrics. }
\label{impact2}
\end{figure*}
\subsection{State-of-the-Art Approach and Baselines}
We compared our approach DRNet with DDQN \cite{Hasselt:2016:DRL:3016100.3016191}, which is considered by DeepMind to be one of the state-of-the art DRL approaches. It is an off-policy-based approach which is introduced in a tabular setting and can be generalized to work with large-scale function approximation, while using the existing architecture and deep neural network of the DQN without requiring additional networks or parameters. In addition, we compared our approach with three other commonly used baselines:

\begin{itemize} 
\item[•]DRNet w/o Action-Subspace: In the decision-making process, its policy uses only the soft-max to select actions and store transitions, rather than using Action-Subspace (as in Section IV.C).
\item[•]DRNet w/o Initialization: During training, we only learn from tabula rasa and store experience transitions, without initialization of the replay buffer (as in Section IV.B).
\item[•]Rule-based \cite{ardelt2012highly}: The traditional and classic decision-making approach for lane changing, in which the vehicle makes a decision by the predefined rules. 
\end{itemize}

\subsection{Training Phase and Learning Evaluation}
Since the appropriate training samples are required, the agent must face the balance between exploration and exploitation. The trade-off between exploration and exploitation in this study is handled by following an $\varepsilon$-greedy policy\cite{mnih2015human}. The main idea of the $\varepsilon$-greedy algorithm is to randomly select an action from the action space with a probability of $\varepsilon$, and to select the current optimal action according to the greedy method with a probability of 1$-\varepsilon$. In this study, we set exploration rate $\varepsilon = max(\varepsilon_0\cdot\gamma^t, \varepsilon_{min})$, where $\varepsilon_0=1, \varepsilon_{min}=0.001$. To measure learning efficiency, Fig.\ref{learningrate}) shows the change in the normalized score over time during training, comparing DRNet with three metrics (excluding the rule-based one). The normalized score for each episode is derived as in \cite{Hasselt:2016:DRL:3016100.3016191}:

\begin{equation}
score=\frac{r-r_{random}}{r_{rule}-r_{random}}
\end{equation}

where the $r_{rule}$ and $r_{random}$ are the rewards in the rule-based approach and random policy, respectively. In Fig. \ref{learningrate}, the equivalence points are highlighted with dashed lines: these are the steps at which the curves reach $100\%$, (i.e., when the algorithm performs equivalently to rule-based driving in terms of score over training). DRNet gives a performance boost over agent learning and, in the aggregate, learning is twice as fast as for the typical DDQN. This is because DRNet filters the actions which are not in the safe subspace at each timeslot, as well as initializing the replay buffer and sampling important transitions.

\subsection{Finding Appropriate Hyperparameters}
Suitable hyperparameters in neural networks will significantly improve the overall performance. We now present the results of our experiments to establish appropriate hyperparameters for the DRNet. For most parameters we simply reused the same common settings as in other DRL algorithms like DQN and DDQN, setting the initial learning rate to 0.0005, update factor $\tau = 0.001$ and batch size $H=512$. Here we select the discount factor $\gamma$, the experience replay buffer size \textit{B}, the number of CNN layers and the number of neurons in fully connected layers. The results are shown in Tables \ref{tablehy1} and \ref{tablehy2}. Our evaluation metric is decision-making efficiency $\sigma$ in Eq.\ref{ef}. In Table.\ref{tablehy1}, we first show the impact of the discount factor and buffer size on decision-making efficiency $\sigma$ when the numbers of neurons and CNN layers are set to 96 and 3, respectively. Table.\ref{tablehy2} shows how the numbers of neurons and CNN layers affect $\sigma$, when discount factor and buffer size are set to 0.93 and $5\times 10^5$, respectively.
\begin{table}[h]
\caption{Impact of discount factor and buffer size on decision-making efficiency}
\begin{center}
\resizebox{0.8\textwidth}{!}{
\begin{tabular}{c|c|c|c|c|c}
\hline
\multirow{2}{*}{Discount factor $\gamma$} & \multicolumn{5}{c}{Buffer size \textit{B}}                           \\ \cline{2-6} 
                                   & {$1\times10^5$}    & \multicolumn{1}{c|}{$2\times10^5$}    & \multicolumn{1}{c|}{\bm{$5\times 10^5$}} & \multicolumn{1}{c|}{$7\times10^5$}    & \multicolumn{1}{c}{$9\times10^5$}    \\ \hline
0.81                               & 0.173265 & 0.347284 & 0.944304       & 0.63425  & 0.566738 \\ \hline
0.84                               & 0.20883  & 0.401099 & 0.920541       & 0.767624 & 0.628575 \\ \hline
0.87                               & 0.26868  & 0.424713 & 1.135139       & 0.83635  & 0.419852 \\ \hline
0.9                       & 0.582929 & 0.351242 & 1.340483       & 0.927959 & 0.743462 \\ \hline
\textbf{0.93}                               & 0.626841 & 0.576017 & 1.580042       & 1.005315 & 0.967259 \\ \hline
0.96                               & 0.486323 & 0.54218  & 1.098705       & 0.841928 & 0.473383 \\ \hline
0.99                               & 0.460551 & 0.4132   & 0.849462       & 0.656382 & 0.389205 \\ \hline
\end{tabular}}
\end{center}
\label{tablehy1}
\end{table}

\begin{table}[h]
\caption{Impact of number of neurons and CNN layers on decision-making efficiency}
\begin{center}
\resizebox{0.8\textwidth}{!}{
\begin{tabular}{c|c|c|c|c|c}
\hline
\multirow{2}{*}{Neurons} & \multicolumn{5}{c}{CNN layers}                        \\ \cline{2-6} 
                               & 2        & \textbf{3} & 4        & 5        & 6        \\ \hline
32                             & 0.373727 & 0.44171    & 0.57528  & 0.48035  & 0.235658 \\ \hline
64                             & 0.854187 & 1.324332   & 1.145406 & 0.757247 & 0.395925 \\ \hline
\textbf{96}                    & 0.930803 & 1.819101   & 1.466221 & 1.383737 & 1.17227  \\ \hline
128                            & 0.791346 & 1.250229   & 1.252174 & 1.181278 & 0.995996 \\ \hline
160                            & 0.906912 & 1.796139   & 1.483087 & 1.275611 & 1.254938 \\ \hline
192                            & 0.616605 & 1.504122   & 1.103096 & 1.238374 & 1.067722 \\ \hline
224                            & 0.595408 & 1.475829   & 1.057635 & 1.085019 & 1.023103 \\ \hline
\end{tabular}}
\end{center}
\label{tablehy2}
\end{table}

\begin{figure*}[t]
\centering
\subfigure[step 1]{ 
\includegraphics[width=3in]{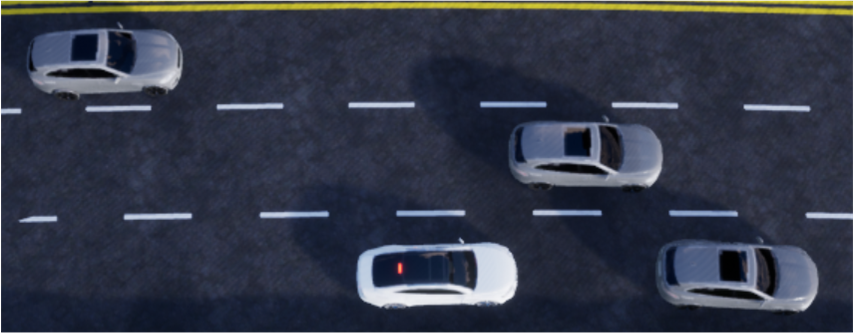}}
\subfigure[step 2]{
\includegraphics[width=3in]{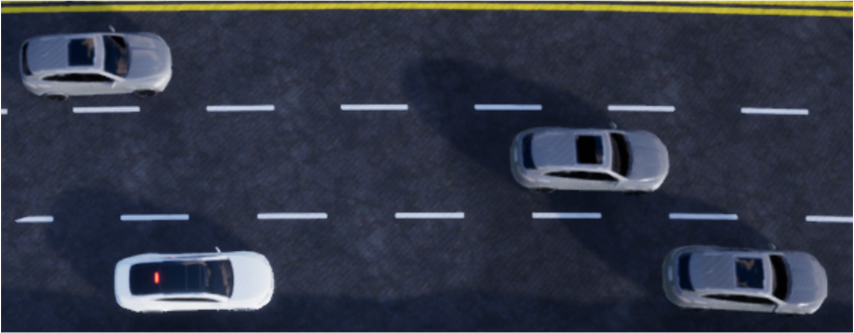}}
\subfigure[step 3]{ 	
\includegraphics[width=3in]{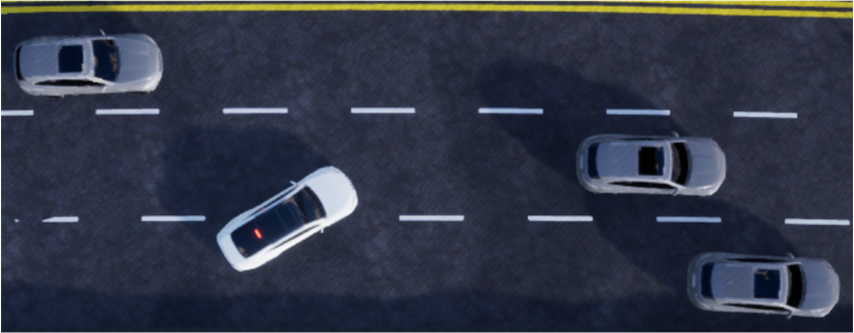}}
\subfigure[step 4]{
\includegraphics[width=3in]{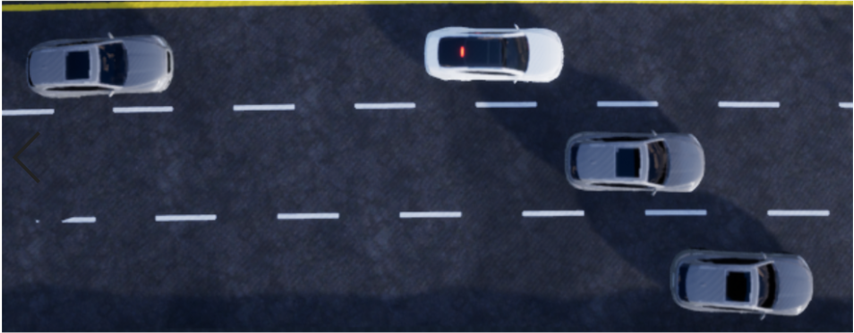}}
\caption{Case1: Far-sighted behavior}
\label{case1}
\end{figure*}
\begin{figure*}[t]
\centering
\subfigure[The original situation]{ 
\includegraphics[width=3in]{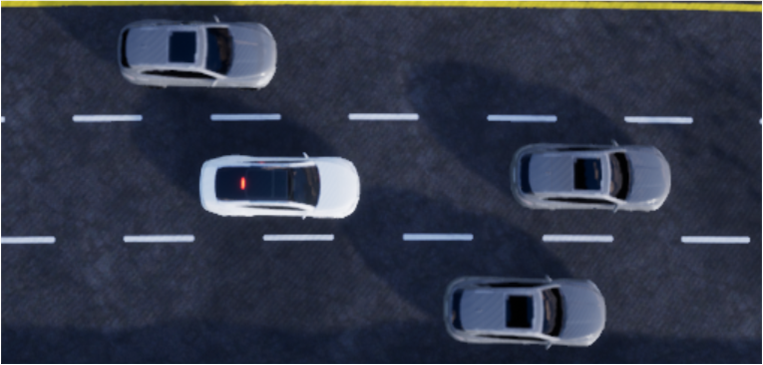}
\label{case21}
}
\subfigure[DRNet]{
\includegraphics[width=3in]{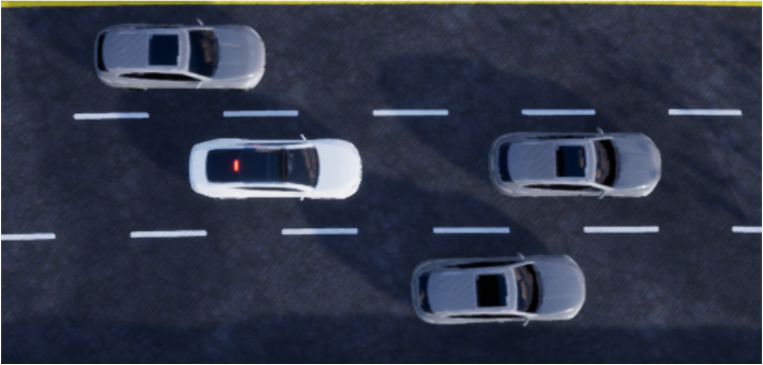}\label{case22}}
\subfigure[DDQN]{ 	
\includegraphics[width=3in]{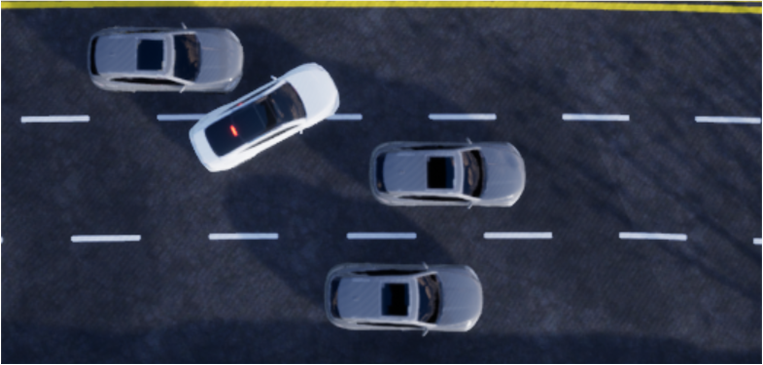}}
\subfigure[rule-based]{
\includegraphics[width=3in]{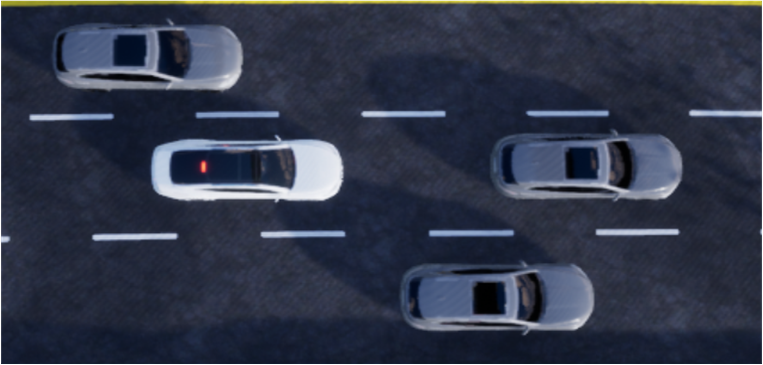}\label{case23}}
\caption{Case2: Impact of Action-Subspace}
\label{case2}
\end{figure*}
\subsection{Comparison with State-of-the-Art Approach and Baselines}
We conducted simulations by varying the sensing range $R$ and the number of participants. Other hyperparameters in our model include the discount factor $\gamma=0.93$, buffer size $B=5\times10^5$, number of CNN layers (3 here) and number of neurons (96). We first examine the impact of sensing range within $[1,2.25]$ with a step of 0.25 on four metrics, as shown in Fig.\ref{impact}. We changed the sensing range from $U=1.00$ to $U=2.25$ with a step of 0.25. After that, Fig.\ref{impact2} shows the impact of number of participants: sensing range $U$ was set to 1.75, and number of participants $P$ was changed from 200 to 700.

From Figures \ref{impact} and \ref{impact2}, we can make the following observations:

(1) First, our method consistently outperforms all the baselines in terms of decision-making efficiency $\sigma$. For example, in Fig.\ref{impact}(a), when the sensing range is 1.50, DRNet achieves a decision-making efficiency of 1.16, compared to 0.54 given by the best baseline DRNet w/o Initialization, a nearly threefold improvement (x2.91). On average, for $\sigma$, DRNet yields significant improvement: by factors of 1.13, 1.04, 2.89 and 1.66 over DRNet w/o Action-Subspace, DRNet w/o Initialization, DDQN and Rule-based, respectively. 

(2) It can be seen from Figures \ref{impact}(a) and \ref{impact2}(a) that the decision-making efficiency of DRNet increases monotonically with the sensing range, and decreases with number of participants from a global perspective. This is because the larger sensing range represents better collection of traffic information, which could make the average velocity and safety ratio keep increasing (as shown in Figures \ref{impact}(b) and \ref{impact}(c)), while reducing unnecessary lane-changing times (Fig.\ref{impact}(d)), so that better decision-making efficiency is achieved. Meanwhile, the larger number of participants means more complicated traffic information and more lane-changing decisions (as shown in Fig.\ref{impact2}(c)).

(3) Finally, it can be seen that DRNet always outperforms DDQN on the four metrics. For instance, in Fig.\ref{impact}(a-d), when the sensing range is 2, DRNet increases the four metrics($\sigma, \overline{v}, S, \overline{Lc}$) by 2.78 times, 18.81$\%$, 21.65$\%$, 61.79$\%$, respectively. DRNet is trained with more “useful” sample transitions, and has a more accurate value estimation of state and action pairs than DDQN. Moreover, DDQN leads to much unsafe decision-making (low safety ratios and high lane-changing times – see Figures \ref{impact}(c-d) and \ref{impact2}(c-d)) due to its blind pursuit of reward maximization.
\subsubsection{Case Study}
Figures \ref{case1} and \ref{case2} show the performance of the trained DRL agent in two segments of one of the testing simulations. In Fig.\ref{case1}, the DRNet agent (white car) executes two lane changes in order to overtake slower leading vehicles and avoid getting “trapped". Starting from (a), the agent observes the slower vehicles in front and in the adjacent lanes; (b) decelerates to prepare for the lane change; (c) executes a lane change to the leftmost lane at a safe distance from the vehicle behind; (d) continues in the leftmost lane and accelerates. In Fig.\ref{case2}, the DRNet agent validates the impact of Action-Subspace in comparison with other methods. In Fig.\ref{case21}, we can see that there are vehicles both in front of the agent and in the adjacent lanes (on both sides). In Fig.\ref{case22}-\ref{case23}, DRNet and the rule-based agent keep following the leading vehicle due to the unsafe distance from the rear left vehicle, while the DDQN agent adopts radical behavior to get more rewards. This decision of the DDQN agent may obviously lead to a collision should the rear vehicle suddenly accelerate.
\subsection{Real-world Study}
To further evaluate the effectiveness of DRNet's lane-change decision-making, data from the Federal Highway Administration’s Next Generation Simulation (NGSIM) program were used \cite{punzo2011assessment}. The NGSIM dataset includes data for two sections of highway, on the Interstate 80 freeway and U.S. Highway 101. The I-80 (BHL) test section is a 0.40 mile (640m) 6-lane freeway system test section with weaving zones and an HOV lane. Processed real data include 45 minutes of vehicle trajectories in the transition (4:00-4:15 pm) and congestion (5:00-5:30 pm) phases. The US101 site is a 0.3 mile (500 m) weaving test section with a five-lane system. Processed real data include 45 minutes of vehicle trajectories in the transition (7:50-8:05 am) and congestion (8:05-8:35 am) phases. 

From the NGSIM dataset, 200 data points were selected as a validation, of which half were chosen for non-merge events (lane stay) and the other half for merge events (lane change). We predicted lane-change intention with DRNet and compared it with the actual situation. The prediction results are shown in Table.\ref{real}. Through analysis of the wrongly predicted cases, we find that our model tends to be a relatively “sharp-witted driver". Compared with conservative drivers, it can seize the opportunity to change lanes, and yet drive more safely than aggressive drivers.
\begin{table}[]
\caption{Prediction Results of DRNet on Real-world Datasets}
\centering
\begin{tabular}{|c|c|c|}
\hline
\multirow{2}{*}{Decision-making} & \multicolumn{2}{c|}{Validation data} \\ \cline{2-3} 
                                 & Observation          & DRNet         \\ \hline
Non-merge (lane stay)            & 100                  & 87\%          \\ \hline
Merge (lane change)              & 100                  & 96\%          \\ \hline
\end{tabular}

\label{real}
\end{table}
\section{Conclusions}
DRNet is a tactical decision-making framework that combines the advantages of both DRL and rule-based methods while avoiding their limitations.  The neural architecture of DRNet is developed to work seamlessly with a lane-changing objective, including experience replay initialization and Action-Subspace, which integrates ideas from safety verification. Last, we find a set of hyperparameters — ie., discount factor 0.93, experience replay buffer size 5$\times 10^5$, 3 CNN layers and 96 neurons of fully-connected layers — for best performance. Compared with the state-of-art approach DDQN and three other baselines, DRNet shows significantly superior performance in terms of average velocity, safety ratio, average lane-changing times and decision-making efficiency.

\bibliographystyle{unsrtnat}
\bibliography{drnet}  %%% Uncomment this line and comment out the ``thebibliography'' section below to use the external .bib file (using bibtex) .

%%% Uncomment this section and comment out the \bibliography{references} line above to use inline references.
% \begin{thebibliography}{1}

% 	\bibitem{kour2014real}
% 	George Kour and Raid Saabne.
% 	\newblock Real-time segmentation of on-line handwritten arabic script.
% 	\newblock In {\em Frontiers in Handwriting Recognition (ICFHR), 2014 14th
% 			International Conference on}, pages 417--422. IEEE, 2014.

% 	\bibitem{kour2014fast}
% 	George Kour and Raid Saabne.
% 	\newblock Fast classification of handwritten on-line arabic characters.
% 	\newblock In {\em Soft Computing and Pattern Recognition (SoCPaR), 2014 6th
% 			International Conference of}, pages 312--318. IEEE, 2014.

% 	\bibitem{hadash2018estimate}
% 	Guy Hadash, Einat Kermany, Boaz Carmeli, Ofer Lavi, George Kour, and Alon
% 	Jacovi.
% 	\newblock Estimate and replace: A novel approach to integrating deep neural
% 	networks with existing applications.
% 	\newblock {\em arXiv preprint arXiv:1804.09028}, 2018.

% \end{thebibliography}

\end{document}